\documentclass{article}
\usepackage{graphicx}
\usepackage{amsmath,bm,bbm}
\usepackage{enumerate}
\usepackage{amssymb}
\usepackage{enumitem}
\usepackage{mathrsfs}
\usepackage{mathtools}
\usepackage{cite}
\usepackage{amsthm}
\usepackage{algorithm}
\usepackage{algorithmicx}
\usepackage[dvipsnames]{xcolor}
\usepackage{algpseudocode}
\usepackage[colorlinks,citecolor=blue,urlcolor=blue]{hyperref}
\usepackage{hyperref,cleveref}
\usepackage{geometry}
\usepackage{authblk}
\usepackage{pdfpages}
\usepackage{mdframed}
\usepackage{eqparbox}
\usepackage[all]{xy}
\usepackage[table]{xcolor}

\DeclareMathOperator*{\argmax}{arg\,max}

\newtheorem{lemma}{Lemma}
\newtheorem{theorem}{Theorem}

\newtheorem{definition}{Definition}
\newtheorem*{remark}{Remark}
\newtheorem{example}{Example}

\newtheorem{question}{Question}

\newcommand\naturalnumber{\mathbb{N}}

\newcommand\E{\mathbb{E}}

\newcommand\rwc{\mathcal{R}}
\newcommand\piref{\pi_{\mathrm{ref}}}

\newcommand{\longsim}{\mathrel{\scalebox{3.0}[1.0]{$\sim$}}}

\def\ddefloop#1{\ifx\ddefloop#1\else\ddef{#1}\expandafter\ddefloop\fi}
\def\ddef#1{\expandafter\def\csname v#1\endcsname{\ensuremath{\boldsymbol{#1}}}}
\ddefloop abcdefghijklmnopqrstuvwxyzABCDEFGHIJKLMNOPQRSTUVWXYZ\ddefloop
\def\ddef#1{\expandafter\def\csname v#1\endcsname{\ensuremath{\boldsymbol{\csname #1\endcsname}}}}
\ddefloop {alpha}{beta}{gamma}{delta}{epsilon}{varepsilon}{zeta}{eta}{theta}{var
theta}{iota}{kappa}{lambda}{mu}{nu}{xi}{pi}{varpi}{rho}{varrho}{sigma}{varsigma}
{tau}{upsilon}{phi}{varphi}{chi}{psi}{omega}{Gamma}{Delta}{Theta}{Lambda}{Xi}{Pi
}{Sigma}{varSigma}{Upsilon}{Phi}{Psi}{Omega}{ell}\ddefloop
	
\newlength\oversetwidth
\newlength\underwidth

\title{Towards a theory of inference-time alignment with \\ unknown rewards}

\author{
Steve Hanneke\thanks{steve.hanneke@gmail.com}
\qquad
Hongao Wang\thanks{wang5270@purdue.edu}
\qquad
Mingyue Xu\thanks{xu1864@purdue.edu}
\\
\normalsize {\sl Purdue University}
}

\date{}

\begin{document}
\maketitle

\begin{abstract}
    Generative model alignment has received broad interest, and significant progress has been made in supervised fine-tuning and inference-time computation. 
    Yet, alignment has remained poorly understood from a statistical learning perspective. 
    We formulate inference-time alignment as a weak-to-strong learning problem, where a reference policy (weak model) is assumed to be fairly good and the goal is to produce a strong model that predicts a good response at test time with arbitrarily high probability.
    Our problem is formulated as learning from scratch --- everything is learned from data rather than assuming access to a good reward estimate, and thus differs from the existing inference-time alignment theory.
    Our framework shares similarity to the recent work of Joshi et al., \cite{Joshi2026learning}, where for each prompt, there could be multiple good responses.
    Our definition of the alignment learnability follows the standard PAC learning principle.
    We introduce a novel combinatorial dimension of the reward class which we call the alignment dimension, and show that it completely characterizes the alignment learnability --- a reward class is alignment learnable if and only if its alignment dimension is finite.
    The core of our learning procedure works by learning a pairwise comparator and then running a tournament over candidate responses. 
    We believe that our results might shed light toward establishing a complete theoretical understanding of alignment.
\end{abstract}

\section{Introduction}
Alignment --- aligning AI agents to behave in accordance with human intentions --- is motivated by the observation that while modern generative models acquire impressive capabilities from scalable training, they may exhibit unintended or harmful behavior \cite{Dario2016concrete,Ouyang2022training}. 
Let us take the language model alignment as example.\footnote{Throughout the paper, we will take the language model alignment as example for presentation, it is worth emphasizing that our theory captures inference-time alignment in general for other generative models.}
Traditional methodology for language model alignment falls into two categories, namely, Reinforcement Learning from Human/AI Feedback (RLHF/RLAIF) \cite{Christiano2017deep,ziegler2020finetuninglanguagemodelshuman,bai2022constitutionalaiharmlessnessai,lee2024rlaifvsrlhf} and inference-time alignment \cite{Khanov2024args,sun2024fast,huang2025bestofn}. 
RLHF/RLAIF methods usually involve two steps, where a parameterized reward function is first learned using the curated preference data annotated by human beings or AI, and is then used to optimize the model to produce responses that receive higher rewards.
In contrast, inference-time alignment approaches work by selecting or steering the model's generated responses according to a reward signal at inference time, and is thus free from updating the model parameters. 
We defer a detailed discussion of these two paradigms to \Cref{sec:related-works}.

Yet, despite its increasing popularity in both empirical and theoretical investigations, only a very few works have tried to model the alignment problem from a statistical learning perspective and explored its learning performance guarantee.
In fact, theoretical works on RLHF have largely focused on designing better preference-optimization objectives, motivated by the empirical observation of the reward overoptimization (a.k.a. reward hacking) \cite{gheshlaghi-azar2024general,liu2024provably,huang2025correcting,xie2025exploratory};
or on deriving sample complexity guarantees under structured learning models, such as assuming linear rewards, log-linear policies, and contextual-bandit formulation \cite{xiong2024iterative,nika2024reward,li2024policy,das2026active}.
For inference-time alignment, existing theoretical researches have conditioned on a reward model satisfying prescribed accuracy or uncertainty assumptions \cite{huang2025bestofn,beirami2025theoretical,aminian2026best,sriraman2026revisiting}, rather than studying the end-to-end statistical problem of learning the reward model from finite preference data and subsequently using it to align unseen test instances.
Motivated by this, we ask a more basic statistical learning question: 

\begin{question}
  \label{ques:learnability-question}
    Which reward classes are learnable for inference-time alignment, given access only to the preference data?
\end{question}

In this work, we provide an answer to the above question by studying the problem of inference-time alignment from a statistical learning perspective. 
For simplicity, we call our problem alignment rather inference-time alignment whose formal definition can be found in \Cref{sec:preliminaries}.
As a preliminary work, we restrict to the basic setting where the reward class contains binary reward functions, i.e., $\rwc\subseteq\{0,1\}^{\mathcal{X}\times\mathcal{Y}}$ and assume that the true reward function lies in class, i.e., $r^{*}\in\rwc$.
Unlike the existing routes, we formulate the alignment problem as a boosting/weak-to-strong learning problem, where the data is generated from a reference/base policy $\piref$ whose performance is assumed to be fair, in a sense that it has a constant probability of generating a good response (with $r^*$ value 1) for a given input prompt. 
Therefore, a reference policy can be thought of as a weak model in the context of weak-to-strong learning. 
In practice, $\piref$ is usually obtained from a pretrained language model or a supervised fine-tuning (SFT) model.
For instance, InstructGPT uses the pretrained GPT-3 as the reference model \cite{Ouyang2022training}.\footnote{Other common reference models include the LLaMA series \cite{touvron2023llamaopenefficientfoundation}, the Gemma series \cite{gemmateam2024gemmaopenmodelsbased}, the Qwen series \cite{bai2023qwentechnicalreport}, etc.} 
This is because pretrained models typically possess nice properties such as linguistic fluency and broad knowledge, but their pretraining does not pay much attention to following human instructions or preferences. 
On this account, practical language model alignment methods are usually designed to output selective responses generated from (some policy close to) $\piref$.
The goal of alignment is to boost the performance of the reference model using the training data with human-annotated preferences. 
We adopt the classical $(\epsilon,\delta)$ standard of statistical learning to measure the performance, namely, with probability at least $1-\delta$, the probability that the learner (aligned model) ``predicts'' a response with $r^*$ value 0 is at most $\epsilon$. 
Thus, the aligned model can be viewed as a stronger model as the target of boosting.
In particular, we consider inference-time alignment --- the learner is expected to select a response with $r^*$ value 1 from multiple $\piref$-generated candidates with arbitrarily high probability at test time.

Our main contributions are summarized as follows.
To our best knowledge, we are the first to develop a statistical learning framework for the general alignment problem, in particular inference-time alignment. 
Our problem is formulated as learning from scratch --- everything is learned from data, which we call ``inference-time alignment with unknown rewards''.
This distinguishes our work from existing inference-time alignment theoretical works since they assume having access to a good reward estimate (see \Cref{sec:related-works} for a detailed discussion).
Although being superficially related to several existing statistical learning frameworks such as 
(i) multiclass learning --- the learner outputs only one of the multiple labels for each test instance \cite{Natarajan1988two,natarajan1989learning} 
and (ii) list learning --- the learner outputs multiple plausible labels for each instance rather than just one \cite{Brukhim2022characterization}, 
we notice that our formulation differs from them --- multiple labels might be correct but the learner only needs to find one of them.
In fact, our learning framework is more close to the recent works of \cite{Joshi2026learning,pour2026learningmultiplecorrectanswers} (see \Cref{sec:related-works} for a detailed discussion), and might be of independent interest to the learning theory community. 
We propose a novel combinatorial dimension of the reward class, which we name the alignment dimension.
We then provide a complete characterization of the learnability of the alignment problem, which we call the alignment learnability --- any reward class is alignment learnable if and only if its alignment dimension is finite. 
We consider our work as an initial step towards establishing a statistical learning theory for the general alignment problem, potentially shedding light on developing more systematic principles for alignment in practice.

\section{Preliminaries}
  \label{sec:preliminaries}
Throughout the paper, let $\mathcal{X}$ be an instance space and $\mathcal{Y}$ be an action space. Let $\rwc\subseteq\{0,1\}^{\mathcal{X}\times\mathcal{Y}}$ be a reward class containing reward functions $r:\mathcal{X}\times\mathcal{Y}\rightarrow\{0,1\}$. Fix some $k\in\naturalnumber$. Let $\mathcal{Z}_{k}=\mathcal{X}\times\mathcal{Y}^{k}$. For any reward function $r:\mathcal{X}\times\mathcal{Y}\mapsto\{0,1\}$, define its induced set-valued acceptance function as $A_{r}:\mathcal{Z}_{k}\mapsto 2^{[k]}$ satisfying $A_{r}(z) = \{j\in[k]: r(x,y_{j})=1\}$ for any $z\in\mathcal{Z}_{k}$. For a reward class $\rwc$, define its induced set-valued acceptance class as $\mathcal{A}_{k}(\rwc)=\{A_{r}: r\in\rwc\}$. Our learning problem can be described as follows.

Assume the realizable case where an unknown target reward function $r^{*}\in\rwc$ is to be learned. Let $\mathcal{D}_{X}$ be some unknown distribution over $\mathcal{X}$ and let $\gamma\in(0,1)$. Let $\piref$ be a conditional distributional measure such that for $\mathcal{D}_{X}$-almost every $x\in\mathcal{X}$,
\begin{equation*}
    \piref(y\in\mathcal{Y}: r^{*}(x,y)=1|x) \geq \gamma .
\end{equation*}
It is worthwhile to emphasize that we only have oracle (query) access to $\piref$, but have no distribution information of it.
For a sample size $m\in\naturalnumber$, we collect $m$ i.i.d.\ samples,
\begin{equation*}
    S_{m}=\{(Z_{1},A_{r^{*}}(Z_{1})),\ldots,(Z_{m},A_{r^{*}}(Z_{m}))\} ,
\end{equation*}
according to the following. For every $i\in[m]$, $Z_{i}=(X_{i},Y_{i}^{(1)},\ldots,Y_{i}^{(k)})\in\mathcal{Z}_{k}$ is a random tuple where 
\begin{equation}
  \label{eq:tuple-distribution}
    X_{i}\sim\mathcal{D}_{X} \;\;\text{and}\;\; Y_{i}^{(1)},\ldots,Y_{i}^{(k)}\overset{\text{i.i.d.}}{\longsim}\piref(\cdot|X_{i}) .
\end{equation}
In brief, each sample contains an instance $X_{i}$, a size-$k$ slate of labels $Y_{i}^{(1)},\ldots,Y_{i}^{(k)}$, and an acceptance set $A_{r^{*}}(Z_{i})\subseteq[k]$ indicating which labels are accepted by the target reward $r^{*}$. For simplicity, we denote by $\mathcal{D}_{k}$ the distribution over $\mathcal{Z}_{k}$ defined in \Cref{eq:tuple-distribution} for a fixed $k\in\naturalnumber$. At test time, we are given a test instance $X\sim\mathcal{D}_{X}$ accompanied with a size-$k$ slate of labels $Y^{(1)},\ldots,Y^{(k)}$ i.i.d.\ generated from $\piref(\cdot|X)$. 

The goal is to design a learning algorithm $\mathcal{A}$ that takes $S_{m}$ as input and outputs a hypothesis $\mathcal{A}(S_{m}):\mathcal{Z}_{k}\rightarrow[k]$ that, given a test tuple $Z=(X,Y^{(1)},\ldots,Y^{(k)})$, predicts an index $\mathcal{A}(S_{m})(Z)=\hat{i}_{m}(Z)\in[k]$ such that the corresponding response $Y^{(\hat{i}_{m}(Z))}$ satisfying $r^{*}(X,Y^{(\hat{i}_{m}(Z))})=1$ with high probability. 
Note that this exactly matches the inference-time alignment since the learner selects a good response from candidates rather than predicts one.
More formally, we define the learnability of our problem as follows.

\begin{definition}  [\textbf{Alignment learnability}]
  \label{def:alignment-learnability}
A reward class $\rwc$ is called \underline{alignment learnable} if there exists a learning algorithm $\mathcal{A}$ such that the following holds. For any $\gamma,\epsilon,\delta\in(0,1)$, there exists $k(\gamma,\epsilon,\delta)<\infty$ such that, for any $k\geq k(\gamma,\epsilon,\delta)$, there exists $m(k,\epsilon,\delta)<\infty$ such that, for every target reward function $r^{*}\in\rwc$, every distribution $\mathcal{D}_{X}$ over $\mathcal{X}$, every reference policy $\piref$ satisfying
\begin{equation*}
  \Pr_{Y\sim\piref(\cdot|x)}\left(r^{*}(x,Y)=1\right)\geq\gamma
\end{equation*}
for $\mathcal{D}_{X}$-almost every $x$, and every $m\geq m(k,\epsilon,\delta)$,
\begin{equation*}
    \Pr_{Z\sim\mathcal{D}_{k}}\left[\hat{i}_{m}(Z)\in A_{r^{*}}(Z)\right] \geq 1-\epsilon ,
\end{equation*}
with probability at least $1-\delta$ over the randomness of $S_{m}$ and the internal randomness of the learner.
\end{definition}

\begin{remark}
In conclusion, a reward class $\rwc$ is alignment learnable if it requires (i) a finite training sample size $m$; and (ii) a finite inference-time sampling round $k$.
The coverage assumption implies that
\begin{equation*}
  k\geq \frac{1}{\gamma}\log\left(\frac{1}{\epsilon}\right)
\end{equation*}
suffices to guarantee that a good response exists in the test slate with probability at least $1-\epsilon$.
The finiteness of the alignment dimension further ensures the finiteness of $m$.
\end{remark}

\subsection{Main results}
  \label{subsec:learnability-characterization}
Our main result is a clean characterization of the alignment learnability as defined in \Cref{def:alignment-learnability}, thus completely answering Question~\ref{ques:learnability-question}.
We introduce a novel combinatorial dimension, which we call the ``alignment dimension'', and show that the alignment dimension of a reward class characterizes its alignment learnability (\Cref{thm:alignment-learnability}).

\begin{definition}  [\textbf{Alignment dimension}]
  \label{def:alignment-dimension}
Let $d\in\naturalnumber$. We say that a set of $d$ triples $\{z_1, \ldots,z_d\}$, where for every $i\in[d]$,
\begin{equation*}
  z_i=(x_i,y_i^{(0)},y_i^{(1)})\in\mathcal{X}\times\mathcal{Y}\times\mathcal{Y} 
\end{equation*}
are \underline{alignment shattered by $\rwc$} if, for every boolean vector $\vb\in\{0,1\}^{d}$, there exists a reward function $r_{\vb}\in\rwc$ satisfying that for every $i\in[d]$,
\begin{equation*}
    r_{\vb}\left(x_i,y_i^{(b_i)}\right)=1
    \;\;\mathrm{and}\;\;
    r_{\vb}\left(x_i,y_i^{(1-b_i)}\right)=0.
\end{equation*}
It is worth emphasizing that $x_1,\ldots,x_d$ are not required to be distinct. The \underline{alignment dimension of $\rwc$}, denoted by $d_{\mathrm{ALN}}(\rwc)$, is the largest $d\in\naturalnumber\cup\{0\}$ for which such an alignment-shattered set exists. If arbitrarily large finite sets are alignment shattered, we say that $d_{\mathrm{ALN}}(\rwc)=\infty$.
\end{definition}

\begin{theorem}
  \label{thm:alignment-learnability}
$\rwc$ is alignment learnable if and only if $d_{\mathrm{ALN}}(\rwc)<\infty$.
\end{theorem}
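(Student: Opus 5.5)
We prove the two directions separately. For sufficiency we follow the comparator-plus-tournament approach from the abstract. For necessity we use a no-free-lunch construction on an alignment-shattered set.

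\textbf{Sufficiency ($d_{\mathrm{ALN}}(\rwc)=d<\infty$).} For each $r\in\rwc$ define a partial comparator on triples $(x,y,y')$: $c_r(x,y,y')=1$ if $r(x,y)=1,r(x,y')=0$, $c_r(x,y,y')=0$ if $r(x,y)=0,r(x,y')=1$, and $c_r=\star$ (``no preference'') otherwise. Alignment shattering of $\{z_i\}$ is exactly shattering of the triples $(x_i,y_i^{(0)},y_i^{(1)})$ by $\{c_r\}$ in the sense of partial concept classes, where both labels must be realized. Hence $d_{\mathrm{ALN}}(\rwc)$ is the VC dimension of this partial class. We then convert the training set into comparator data. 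From each sample $(Z_i,A_{r^*}(Z_i))$ we take one disjoint pair, say $(Y_i^{(1)},Y_i^{(2)})$, and keep it only if exactly one response is accepted; this preserves independence. Any pair with a defined label is drawn from the distribution $P$ of $(X,Y,Y')$ with $Y,Y'$ i.i.d.\ from $\piref(\cdot|X)$, conditioned on $c_{r^*}\neq\star$. Realizable learning of partial classes with finite VC dimension, e.g.\ via the one-inclusion graph and Alon--Hanneke--Holzman--Moran, gives a total comparator $h$ with $\Pr_P[h\neq c_{r^*},\,c_{r^*}\neq\star]\le\eta$ using $m=\tilde O((d+\log(1/\delta))/\eta)$ samples. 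Uniform-convergence ERM is not available in the partial setting, which is why we use these tools. Finally, the test-time learner runs a round-robin tournament on the slate with comparator $h$ and outputs the index with the most wins. We make $h$ antisymmetric by symmetrizing the order of each pair.

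The main obstacle is the tournament analysis, which must turn a pairwise error $\eta$ into a selection error $\epsilon$ with $k$ chosen independently of $m$. A good response $j$ beats every bad response unless $h$ errs on one of those pairs. A bad response $\ell$ can win only against other bad responses, because against each good one it either loses or the comparator erred. If $G$ is the number of good responses, $B=k-G$, and $E_j$ is the number of erring pairs involving $j$, then a bad $\ell$ has at most $(B-1)+E_\ell$ wins and a good $j$ has at least $B-E_j$ wins. This alone fails, since bad responses may tie among themselves. So we instead use a sequential knockout tournament (champion-holder) and argue that the champion becomes good once a good response is encountered and stays good unless an error occurs on a defined pair involving it. There are at most $k-1$ comparisons, each on a fresh pair of i.i.d.\ responses. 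By exchangeability each comparison has marginal distribution $P$, so a union bound gives failure probability at most $(1-\gamma)^k+(k-1)\eta$. Choosing $k=\lceil\gamma^{-1}\log(2/\epsilon)\rceil$ and $\eta=\epsilon/(2k)$ yields error $\epsilon$. Confidence $\delta$ is handled by a Markov/amplification step, or directly by the high-probability version of the partial-class learner. We must still check that the pairwise error bound is stated under $P$, and that both the training pairs and the test comparisons have this same marginal.

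\textbf{Necessity.} Suppose $d_{\mathrm{ALN}}=\infty$. For any $k$ and $m$, take a shattered set of size $n\gg m$. Let $\mathcal{D}_X$ be uniform over the indices $i\in[n]$, with a point tagged by $i$ if the $x_i$ coincide. For this to work, the instance must identify $i$; distinct $x_i$ can be enforced by a thinning argument, or by noting that repeated $x$'s only make things harder through a conditional mixture. Let $\piref(\cdot|x_i)$ be uniform on $\{y_i^{(0)},y_i^{(1)}\}$, so $\gamma=1/2$. Draw $\vb$ uniformly and set $r^*=r_{\vb}$. On a test point whose index never appeared in training, the two candidate responses are symmetric given the data, and $b_i$ is an unbiased coin independent of the data. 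Hence the learner picks the bad response with probability $1/2$ whenever both candidates appear in the slate, which has probability $1-2^{1-k}\ge1/2$. The expected error is therefore at least about $\frac14(1-m/n)$, and a standard averaging argument gives a fixed $r^*$ violating the requirement for $\epsilon=\delta=1/16$. So $\rwc$ is not alignment learnable.
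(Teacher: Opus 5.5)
\textbf{Sufficiency.} Your reduction matches the paper: the partial comparator class has VC dimension $d_{\mathrm{ALN}}(\rwc)$, and you use the Alon--Hanneke--Holzman--Moran learner with a Chernoff step to control error under the ambient pair distribution. The tournament analysis, however, contains a false step, and your stated parameter choice does not give error $\epsilon$.

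First, the round-robin does not fail. If $h$ is correct on every pair containing exactly one accepted response, each accepted response has at least $B$ wins and each rejected one at most $B-1$. Ties among rejected responses are therefore irrelevant.

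Second, your knockout bound rests on the claim that each comparison (champion, $Y^{(j)}$) has marginal $P$. This is false. The champion is chosen by $h$ from earlier responses, so it is biased toward whatever $h$ favors, including responses $h$ misranks. For a concrete case, at one prompt take a good response of mass $\gamma$ and a bad response $t$ of mass $q$, with all other mass bad. Let $h$ rank $t$ above everything and be correct on all other mixed pairs. Then $\eta=2\gamma q$, but the knockout fails whenever $t$ appears in the slate, i.e.\ with probability about $kq=k\eta/(2\gamma)$. With your $\eta=\epsilon/(2k)$ this is about $\epsilon/(4\gamma)$, which exceeds $\epsilon$ for small $\gamma$.

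The fix is the paper's. Each non-adaptively indexed pair $(X,Y^{(a)},Y^{(b)})$ with $a<b$ does have marginal $\mathcal{D}_{\mathrm{pair}}$. So with $\eta=\epsilon/(2\binom{k}{2})$, a union bound over all $\binom{k}{2}$ pairs shows that with probability at least $1-\epsilon/2$ no mixed pair is misclassified. On that event the round-robin returns an accepted response, and so does your knockout, since its comparisons are among these pairs. Because $k$ is fixed before $m$, the extra factor of $k$ is irrelevant for learnability.

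\textbf{Necessity.} You cannot assume distinct prompts, and neither of your remedies works as stated.

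Thinning is impossible in general. Take $|\mathcal{X}|=1$, $\mathcal{Y}=\naturalnumber$, and $\rwc$ the set of all rewards accepting exactly one of $2i,2i+1$ for each $i$. Then $d_{\mathrm{ALN}}(\rwc)=\infty$, yet no two shattered triples have distinct prompts, and points of $\mathcal{X}$ cannot be ``tagged''.

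Repeated prompts also do not automatically make things harder. If $\piref(\cdot|x)$ mixes the $n_x$ pairs sharing $x$, each training slate can reveal the bits of up to $k$ pairs, and each test slate offers the learner up to $k$ pairs to choose from. So the shattered set must be large compared with $mk^{2}$, not just $m$.

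The missing argument runs as follows.
\begin{enumerate}
\item Shattering forces pairs sharing a prompt to be disjoint, since a common response would need reward $1$ and $0$.
\item Set $\mathcal{D}_X(x)=n_x/n$ and let $\piref(\cdot|x)$ be uniform over the $2n_x$ endpoints. Coverage is then exactly $1/2$, and the training labels reveal only the bits of at most $mk$ touched pairs.
\item By $(1-t)^k\geq 1-kt$, all $k$ test responses avoid touched pairs with probability at least $1-mk^{2}/n$. Taking $n\geq 4mk^{2}$ makes this at least $3/4$.
\item On that event, the selected response's reward is an unrevealed fair bit.
\end{enumerate}
Your averaging step then completes the proof.
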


\section{Related works}
  \label{sec:related-works} 
\paragraph{RLHF/RLAIF.} 
Reinforcement Learning from Human/AI Feedback (RLHF/RLAIF) framework is summarized as follows. 
Given an input/prompt $x$ and an action/response $y$, the evaluation of $y$ on $x$ is assumed to be generated by some latent reward model $r^{*}(x,y)$, which lies in a reward class $\rwc$. 
For a pair of responses $(y^{w},y^{l})$ on $x$, human preferences are captured by a theoretical preference model called the Bradley-Terry model \cite{Bradley1952rank}. 
Specifically, given $r^{*}$ being the true reward function, the probability that a response $y^{w}$ is preferred over another response $y^{l}$ on the prompt $x$ is stipulated as
\begin{equation*}
    \mathbb{P}_{r^{*}}\left(y^{w}\succ y^{l}\big| x\right) = \frac{\exp{\{r^{*}(x,y^{w})\}}}{\exp{\{r^{*}(x,y^{w})\}}+\exp{\{r^{*}(x,y^{l})\}}} =: \sigma\left(r^{*}(x,y^{w})-r^{*}(x,y^{l})\right) ,
\end{equation*}
where $\sigma(x)=1/(1+e^{-x})$ is the logistic function.
RLHF/RLAIF methods usually consist of two steps --- reward maximization followed by policy searching via KL-divergence regularized reinforcement learning.
The training preference dataset $\mathcal{S}_{n}:=\{(x_{i},y^{w}_{i},y^{l}_{i})\}_{i=1}^{n}$ is generated according to $y^{w},y^{l}\sim\piref(\cdot|x)$ where $\piref$ is called the reference policy, and the preferences $y^{w} \succ y^{l}$ are annotated by human beings/AI. 
Reward maximization refers to searching for a reward function $r\in\rwc$ to maximize the following likelihood objective
\begin{equation*}
    \mathcal{L}\left(r,\mathcal{S}_{n}\right) = \E_{(x,y^{w},y^{l})\sim\mathcal{S}_{n}}\left[\log\sigma\left(r(x,y^{w})-r(x,y^{l})\right)\right] .
\end{equation*}
The learned reward function $\hat{r}$ will be used in the next step of RL fine-tuning --- searching for a policy (conditional distribution) $\pi$ from some policy hypothesis class $\Pi$ to maximize the following KL-divergence regularized RL objective
\begin{equation*}
    \max_{\pi\in\Pi}\left\{\E_{x\sim\mathcal{S}_{n},y\sim\pi(\cdot|x)}\left[\hat{r}(x,y)\right] - \beta\cdot\mathbb{D}_{\text{KL}}\left(\pi(y|x) \;||\; \pi_{\text{ref}}(y|x)\right)\right\} ,
\end{equation*}
where $\beta>0$ is a hyperparameter to control the weight of the KL-regularizer. 

This objective of RLHF/RLAIF has been extensively studied in the past few years \cite{Christiano2017deep,ziegler2020finetuninglanguagemodelshuman,stiennon2020learning,bai2022constitutionalaiharmlessnessai,bai2022traininghelpfulharmlessassistant,lee2024rlaifvsrlhf}. 
Early works mainly focused on designing efficient RL algorithms such as the Proximal Policy Optimization (PPO) \cite{schulman2017proximal} to learn a policy based on a well-estimated reward function. 
Motivated by the fact that the traditional RLHF pipeline is complex, computationally heavyweight and often unstable, \cite{Rafailov2023direct} proposed the celebrated Direct Preference Optimization (DPO) algorithm, which directly optimize a policy to adhere to human preferences in a single stage of policy training, without explicit reward modeling.
Since then, substantial research effort has shifted toward developing such direct optimization algorithms for various preference learning settings such as iterative/on-policy preference learning \cite{yuan2024self,xiong2024iterative,wu2025selfplay}, active preference learning \cite{muldrew2024active,das2026active}, online/exploratory preference learning \cite{xie2025exploratory}, and preference learning without using reference model \cite{meng2024simpo}. 
More recently, motivated by the fact that many reasoning tasks admit rewards that can be automatically checked by programmatic verifiers rather than relying on expensive human-provided preference signals (e.g., verifying the correctness on mathematical problems and code execution), Reinforcement Learning with Verifiable Rewards (RLVR) has been proposed as a variant of RLHF \cite{lambert2025tulu,shao2024deepseekmathpushinglimitsmathematical,wen2026reinforcement}. 
Unlike in general RLHF where rewards are typically real-valued, RLVR adopts binary rewards --- assigning $1$ when the generated response passes an automatic verifier and $0$ otherwise --- which exactly matches our formulation. 
Therefore, our formulation captures exactly inference-time alignment with verifiable rewards.

\paragraph{Inference-time alignment.}
Instead of fine-tuning the language model using the learned reward function as in RLHF, an inference-time alignment algorithm keeps the reference policy $\piref$ fixed and uses the reward model only at test time, such as to do rejection sampling \cite{sun2024fast}, to guide decoding token by token \cite{xu2025genarm}, to implement planning and search \cite{yao2023tree}, etc.
For instance, a canonical inference-time alignment algorithm is the Best-of-$N$ (BoN) heuristic \cite{sun2024fast,huang2025bestofn,beirami2025theoretical}. 
Specifically, given an imperfect reward model $\hat{r}$ and a new test input $x$, BoN works by independently sampling $N$ candidate responses $y^{(1)},\ldots,y^{(N)}$ from $\piref(\cdot|x)$, and returns the one with the highest reward of $\hat{r}$, i.e.,
\begin{equation*}
    \hat{y}_{\mathrm{BoN}}(x) = y^{(j)}, \;\;\text{where}\;\; j\in\argmax_{1\leq i\leq N}\hat{r}(x,y^{(i)}) .
\end{equation*}
The imperfect reward model $\hat{r}$ can be an open-source reward model or a reward model trained on data separately.
As not being the main focus, theoretical works of inference-time alignment usually do not involve analyzing the performance of estimating the true reward function $r^{*}$ by $\hat{r}$, but rather quantifying this estimation error and leave it in bounds. 
For instance, the estimation quality of $\hat{r}$ can be measured by the expected squared error with respect to $r^{*}$ under the reference policy $\piref$, i.e.,
\begin{equation*}
    \E_{y\sim\piref(\cdot|x)}\left[\left(\hat{r}(x,y)-r^{*}(x,y)\right)^{2}\right] .
\end{equation*}
In contrast, our learnability result takes both the reward estimation and the test-time computation into consideration since our problem is to learn from scratch --- everything should be learned from data without any additional assumptions.
Therefore, our theory explains the most general version of inference-time alignment problem. 
We refer the readers to the work of \cite{huang2025bestofn} for a detailed review of inference-time alignment researches.

\paragraph{Learning with multiple correct answers.} 
Existing theoretical works that are most relevant to ours are perhaps \cite{Joshi2026learning,pour2026learningmultiplecorrectanswers}. 
They both study the same setting as we consider in this work where there are multiple correct answers for an instance, and the learner only needs to output one of them. 

Similar to us, \cite{Joshi2026learning} studied the problem of language model alignment from a statistical learning perspective. 
They also assumed that the true reward function $r^{*}$ lies in a low-cardinality reward class $\rwc\subseteq\{0,1\}^{\mathcal{X}\times\mathcal{Y}}$.\footnote{They further extend their results to $\rwc\subseteq[0,1]^{\mathcal{X}\times\mathcal{Y}}$. For simplicity, we discuss here, the case of binary rewards.} 
In their primary setting, the learner receives i.i.d.\ samples $\{x_{i},y_{i}\}_{1\leq i\leq m}$ where each sample contains $x_{i}\sim\mathcal{D}$ and $y_{i}$ being one of the good responses to the input $x_{i}$, i.e., $r^{*}(x_{i},y_{i})=1$. 
Indeed, they assumed that each $y_{i}$ is generated from a demonstration policy $\tilde{\pi}(\cdot|x_{i})$ supported exactly on the set of good responses given $x_{i}$.
Denote $V_{r^{*}}(\pi)=\E_{x\sim\mathcal{D},y\sim\pi(\cdot|x)}[r^{*}(x,y)]$.
Their main result is that, with high probability,
\begin{equation*}
    \Big|V_{r^{*}}(\hat{\pi}) - \sup_{\pi}V_{r^{*}}(\pi)\Big| = O\left(\frac{\log{(|\rwc|)}}{m}\right) .
\end{equation*}
Their learning algorithm utilizes the classical multiplicative-weights update by maintaining weights over all candidate reward functions and predicting according to their weighted aggregate.
Moreover, they also studied a natural extension which they called the pass@$k$ metric, where the learner still receives training data with a single correct response per sample, but at the test time, the learned policy can output a tuple of responses whose reward is the best reward among the candidates.
Under this extension, they improved the convergence rate from $O(\log(|\rwc|)/m)$ to $O(\log_{k+1}(|\rwc|)/m)$. 

To clarify our contributions, we list the main difference between our work and theirs as follows. 
(i) They modeled the problem to also include a policy class in order to exactly match the pipeline of RLHF, while our work models the alignment problem without including a policy class mainly for capturing inference-time alignment methods.
(ii) One of the main limitations of their results is that, they restricted to the case where the reward class has a finite cardinality. 
In contrast, our work provides a complete learnability guarantee for any (infinite) reward class.
(iii) They considered that each training sample contains a single correct response, while we assume having multiple responses sampled from the reference policy. 
We leave the learnability of the single-response model for future works (see our discussion in \Cref{sec:discussion}).
(iv) Finally, their pass@$k$ extension is the most close to our formulation. However, the success of a learning algorithm in their model only requires that the output tuple at test time contains a good response (which is very close to the list learning\footnote{List learning assumes exactly one correct label but their model assumes multiple correct labels. In other words, their learner is expected to output a tuple that intersects with the true label set.}), while our definition of success requires to exactly pick out the correct one.

\cite{pour2026learningmultiplecorrectanswers} instead considered an online learning variant.
They introduced three types of reward feedback models, namely, 
the mistake-unknown feedback model --- after predicting, the learner only sees a correct response but is not told whether its prediction was correct or not, 
the mistake-known feedback model --- the learner is told a correct response as well as the correctness of its prediction, 
and the set-valued feedback model --- the learner sees the entire multiset of correct responses after prediction. 
They further introduced three generalized versions of the Littlestone dimension \cite{littlestone1988learning} to characterize the mistake bound of these three online learning frameworks, respectively.

\section{Examples}
  \label{sec:examples}
In this section, we provide several concrete examples that (i) distinguish our alignment problem from the standard binary/multiclass classification problems; (ii) rule out certain candidates of the alignment learnability characterization as well as some natural learning algorithms; and (iii) guide us towards considering the one-inclusion-graph algorithm used in \Cref{sec:proof-learnability}.

\subsection{Alignment is not a standard classification problem}
  \label{subsec:alignment-is-not-standard-classification}
Our first example shows that our alignment problem differs intrinsically from the standard binary/multi-class classification problem, thus ruling out several candidates for the characterization of the alignment learnability.
Before proceeding to the example, we present the formal definition of the classic VC dimension (for binary classification) and the DS dimension (for multiclass classification).

\begin{definition}   [\textbf{VC dimension \cite{VC71}}]
  \label{def:vc-dimension}
We say that a set $\{z_1 = (x_1,y_1),\ldots,z_n = (x_n,y_n)\}$ is shattered by $\rwc$, if for every $\vb\in\{0,1\}^n$, there exists $r_{\vb} \in \rwc$, such that $r_{\vb}(z_i) = b_i$ for every $i\in [n]$. 
    
The \underline{VC dimension of $\rwc$} is defined as the largest $d\in\naturalnumber\cup\{0\}$ such that there exists a set $\{z_1,\ldots, z_d\}$ shattered by $\rwc$. If no largest finite $d$ exists, we say that $\rwc$ has an infinite VC dimension.
\end{definition}

\begin{definition}  [\textbf{DS dimension \cite{daniely2014optimal}}]
  \label{def:ds-dimension}
Let $z_i = (x_i, y_i^{(1)}, \ldots, y_i^{(k)})$. We say that a set $\{z_1,\ldots,z_n\}$ is DS-shattered by $\mathcal{A}_k(\rwc)$, if there exists a finite subset $\mathcal{A}'_k(\rwc)\subseteq \mathcal{A}_k(\rwc)$, for every $A_{r}\in \mathcal{A}'_k(\rwc)$ and for every $i\in [n]$, there exists $A_{r'}\in \mathcal{A}'_k(\rwc)$ such that $A_{r'}(z_i) \neq A_{r}(z_i)$ and for every $j\neq i$, $A_{r'}(z_j) = A_{r}(z_j)$.

The \underline{DS dimension of $\mathcal{A}_k(\rwc)$} is defined as the largest $d\in\naturalnumber\cup\{0\}$ such that there exists a set $\{z_1,\ldots, z_d\}$ DS-shattered by $\mathcal{A}_k(\rwc)$. If no largest finite $d$ exists, we say that $\mathcal{A}_k(\rwc)$ has an infinite DS dimension.
\end{definition}

\begin{example}
 \label{ex:not-vc}
Let $\mathcal{X}$ be an infinite space and $|\mathcal{Y}|=2$. Let $\{z_1 = (x_1,y^{(0)}, y^{(1)}),z_{2} = (x_2,y^{(0)}, y^{(1)}),\ldots\}$ be an infinite space. 
For any $\vb\in\{0,1\}^{\mathbb{N}}$, define a reward function $r_{\vb}$ according to $r_{\vb}(x_i,y^{(0)}) = 1, r_{\vb}(x_i,y^{(1)}) = b_i$, for any $i\in\naturalnumber$. 
Define a reward class $\rwc=\{r_{\vb}, \forall \vb\in\{0,1\}^{\mathbb{N}}\}$.
\end{example}

A natural idea to first learn a reward function by solving a binary classification problem where the fundamental Empirical Risk Minimization (ERM) rule achieves nearly optimal guarantee, and then apply the Best-of-$N$ or rejection sampling at test time according to the learned reward function. 
This idea suggests that the VC dimension of the reward class $\rwc$ might be the characterization of the alignment learnability (achieved by ERM).
However, \Cref{ex:not-vc} refutes this conjecture by providing an instance of $\rwc$ that has an infinite VC dimension, yet is alignment learnable \emph{without any sample}.
To see this, note that 
\begin{itemize}
    \item $\rwc$ has an infinite VC dimension since the (infinite) set $\{(x_1,y^{(1)}), (x_2,y^{(1)}),\ldots\}$ is shattered by $\rwc$.
    \item $\rwc$ has an alignment dimension $d_{\mathrm{ALN}}(\rwc)=0$ since any reward function assigns $1$ to $(x_{i},y^{(0)})$ for any $i\in\naturalnumber$.
    \item $\rwc$ is alignment learnable \emph{without any training samples} using the following learning strategy. 
    For any test prompt $x$, sample $k=\Theta((1/\gamma)\log(1/\epsilon))$ responses from $\piref(\cdot|x)$ at test time. 
    (This ensures that a good response exists with high probability in the candidate pool. See \Cref{subsec:upper-bound} for a rigorous reasoning.) 
    Then, return $y^{(0)}$ if any of the $k$ candidates is $y^{(0)}$, and return $y^{(1)}$ otherwise.
    This is because, if $y^{(0)}$ exists, it is always acceptable since all reward functions (including the target) agree that it is good. 
    Otherwise if $y^{(0)}$ does not exist, since at least one good response is in the pool, $y^{(1)}$ must be a good response. 
    \item The ERM algorithm fails to learn $\rwc$ according to \Cref{def:alignment-learnability}. 
    Let $r^{*}=r_{\mathbf{0}}$ and $\piref(y^{(0)}|x_{i})=\piref(y^{(1)}|x_{i})=1/2$ for any $i\in\naturalnumber$.
    Indeed, for any finite sample size $m$, the worst-case ERM predicts a reward function $\hat{r}_{m}$ that satisfies $\hat{r}_{m}(x_{i},y^{(1)})=1$ for any $x_{i}$ that does not show up in the training sample.
    Therefore, optimizing at test time using $\hat{r}_{m}$ predicts $y^{(1)}$ for a given $x$ and is always wrong.
\end{itemize}

Another natural approach is to consider the alignment problem as learning a set-valued multiclass classification with a finite label space of size $2^{k}$, where each multiclass classifier $A_{r}\in\mathcal{A}_k(\rwc) :\mathcal{Z}_{k} \to 2^{[k]}$. 
This means equivalently, for multiple responses to each prompt, to learn exactly which of them are good. 
It implies that the characterization of the alignment learnability might be the DS dimension of $\mathcal{A}_k(\rwc)$.
However, for the reward class $\rwc$ in \Cref{ex:not-vc}, its induced set-valued acceptance class $\mathcal{A}_2(\rwc)$ has an infinite DS dimension since the (infinite) set $\{z_1, z_2,\ldots \}$ is DS-shattered by $\mathcal{A}_2(\rwc)$. Therefore, the alignment problem also differs in nature from multiclass classification.

\subsection{Learn-and-optimize principle fails}
  \label{subsec:learning-then-optimizting-fails}
Recall that \Cref{ex:not-vc} has ruled out the strategy of learning a reward function by training an ERM on data and then optimizing at test-time based on the learned reward function. 
More generally, it falls into a broad class of algorithms following the ``learn-and-optimize'' principle, that is, to first learn a reward function (not necessarily in the reward class) only based on the training data (independent of the test instance) and then select the best response (under the evaluation of the learned reward function) at test time (e.g., doing Best-of-$N$ or rejection sampling). 
Our next example shows that any learning strategy following the ``learn-and-optimize'' principle fails to guarantee alignment learnability. 

\begin{example}
  \label{ex:learn-and-optimize-fails}
Let $p\in\naturalnumber$. Let $\{z_1 = (x_1,y^{(1)},y^{(2)},\ldots, y^{(2p+1)}), z_2 = (x_2,y^{(1)},y^{(2)},\ldots, y^{(2p+1)}), \ldots\}$ be an infinite space. For any $\vs \in [2p+1]^{\mathbb{N}}$, any $i\in\mathbb{N}$ and any $j\in [2p+1]$, define a reward function $r_{\vs}$ according to 
\begin{equation*}
    r_{\vs}(x_i,y^{(j)}) = 
    \begin{cases}
        1, \quad j \leq s_{i} \\
        0, \quad j > s_{i}
    \end{cases} .
\end{equation*}
Define $\rwc = \{r_{\vs}, \forall \vs \in [2p+1]^{\mathbb{N}}\}$.
\end{example}

Let $\vs = \{s_i\}_{i\in\mathbb{N}}$ such that $s_i$ is i.i.d.\ sampled from $\mathrm{Unif}([2p])$, where $\mathrm{Unif}([2p])$ stands for the uniform distribution on $[2p]$. 
Let $r^* = r_{\vs}$ and $\pi_{\mathrm{ref}}(y^{(s_i)}|x_i) = \pi_{\mathrm{ref}}(y^{(s_i+1)}|x_i) = 1/2$. 
For any test prompt $x_i$ that does not show up in the training data $S_m$, assume that the sampling round $k$ is sufficiently large such that at least one good response exists in the candidate pool.
Now, if the candidate pool contains two distinct responses, for any learned binary reward function $\hat{r}$, $\E_{\vs}[\Pr_{Y\sim\pi_{\mathrm{ref}}(\cdot|x_i)}(\hat{r}(x_i,Y) \neq r^*(x_i,Y))] \geq 1/2$.
Therefore, the learn-and-optimize principle fails here. 
However, $d_{\mathrm{ALN}}(\rwc)=0$ and the alignment learnability is achievable for \Cref{ex:learn-and-optimize-fails} by utilizing test-time sampling (see \Cref{subsec:pessimism-principle-fails} for details). 

We emphasize that \Cref{ex:learn-and-optimize-fails} does not rule out realizing the alignment learnability by the learn-and-optimize principle when a real-valued reward function (which might encode rank information) can be learned for optimization at test time, which we leave open for future work.

\subsection{Pessimism principle fails}
  \label{subsec:pessimism-principle-fails}
Pessimism --- intentionally constructing a conservative, lower-bound estimate of the reward function when uncertainty arises from data scarcity --- has been shown to provide provable benefits in offline RL \cite{jin2021pessimism,xie2021bellman} and reward function learning \cite{zhu2023principled,zhan2024provable}. 

Indeed, we can show that having access to the test instance and a pool of $k=\Theta((1/\gamma)\log(1/\epsilon))$ candidate responses, a pessimism principle realizes the alignment learnability of the reward class defined in \Cref{ex:learn-and-optimize-fails}. 
To see this, note that when the candidate pool contains only one unique response, it must be a good one since $\piref$ is assumed to generate at least one good response within $k$ times of sampling.
When at least two distinct responses are observed, the pessimism principle will only assign reward value $1$ to the response $y^{(j)}$ in the pool that has the smallest index $j\in[2p+1]$, and assign reward value $0$ to all the other responses.
In other words, only the smallest-indexed response, which is guaranteed to be good, will be accepted at test time. 

More formally, a pessimism principle can be interpreted as running the so-called \emph{Closure} algorithm \cite{auer2007new} on a test instance-dependent version space. 
Let $S_{m}=\{Z_{i}, A_{r^{*}}(Z_{i})\}_{i=1}^{m}$ denote the training data. 
We adapt the classical notion of version space \cite{mitchell1977version} to define 
\begin{equation*}
    V_{S_{m}}(\rwc) = \left\{r\in\rwc:\; A_{r}(Z_{i})=A_{r^{*}}(Z_{i}),\;\forall i\in[m]\right\} .
\end{equation*}
Let $z=(x,y^{(1)},\ldots,y^{(k)})$ be a test instance. Define the ``test-aware version space'' as 
\begin{equation*}
    \hat{\rwc}_{m} = \left\{r\in V_{S_{m}}(\rwc):\; \exists i\in[k] \;\;\mathrm{s.t.}\;\; i\in A_{r}(z)\right\} .
\end{equation*}
Now, we attribute the pessimism principle to running the Closure algorithm on $\hat{\rwc}_{m}$, i.e., to return a reward function that computes the pointwise minimum reward value:
\begin{equation*}
    \hat{r}_{\mathrm{closure}}(x,y) = \min\left\{r(x,y): r\in\hat{\rwc}_{m}\right\} ,\;\; \forall (x,y)\in\mathcal{X}\times\mathcal{Y} .
\end{equation*}
And then, using $\hat{r}_{\mathrm{closure}}$ to optimize and select at test time.
Equivalently, the learning procedure selects, for the test instance $z=(x,y^{(1)},\ldots,y^{(k)})$, a response that lies in the intersection of all acceptance sets induced by reward functions in $\hat{\rwc}_{m}$, that is, select $y^{(\hat{i}_{m})}$ with
\begin{equation*}
    \hat{i}_{m} \in [k] \cap \left(\bigcap\nolimits_{r\in\hat{\rwc}_{m}}A_{r}(z)\right) .
\end{equation*}

Our final example, however, suggests that the pessimism principle does not always guarantee the alignment learnability since $\cap_{r\in\hat{\rwc}_{m}}A_{r}(z)=\emptyset$ might happen.

\begin{example}
 \label{ex:pessimism-fails}
Let $\mathcal{X}$ be an infinite space and $|\mathcal{Y}|=2$. Let $\{z_1 = (x_1,y^{(0)}, y^{(1)}),z_{2} = (x_2,y^{(0)}, y^{(1)}),\ldots\}$ be an infinite space. 
For any $\vb\in\{0,1\}^{\mathbb{N}}$ and any $i\in\naturalnumber$, define a reward function $r_{\vb,i}$ according to 
\begin{equation*}
    \begin{cases}
        \begin{aligned}
            &r_{\vb,i}(x_{j}, y^{(0)}) = 1 ,\;\; &&r_{\vb,i}(x_{j}, y^{(1)}) = b_{j}, &&&j\neq i \\
            &r_{\vb,i}(x_{j}, y^{(0)}) = b_{j} ,\;\; &&r_{\vb,i}(x_{j}, y^{(1)}) = 1, &&&j=i
        \end{aligned}
    \end{cases} ,\;\;
    \forall j\in\naturalnumber .
\end{equation*}
Define $\rwc=\{r_{\vb,i}, \forall \vb\in\{0,1\}^{\mathbb{N}}, \forall i\in\naturalnumber\}$.
\end{example}

As an extension of \Cref{ex:not-vc}, \Cref{ex:pessimism-fails} provides a reward class with an infinite VC dimension and an alignment dimension $d_{\mathrm{ALN}}(\rwc)=1$. 
Given a finite training sample, if the test instance does not show up in the training data and the candidate pool contains both $y^{(0)}$ and $y^{(1)}$, $\hat{\rwc}_{m}$ contains reward functions that realize both $(1,0)$ and $(0,1)$ patterns of rewards for $((x,y^{(0)}),(x,y^{(1)}))$.
Therefore, the pessimism principle yields a reward function that assigns reward value $0$ to both $(x,y^{(0)})$ and $(x,y^{(1)})$, and thus cannot return a good response at test time.

Given the failure of the ERM and pessimism principles, tie-breaking (at the test point) seems to matter a lot, guiding us towards considering the one-inclusion graph algorithm.

\section{Proof of Theorem~\ref{thm:alignment-learnability}}
  \label{sec:proof-learnability}
In this section, we prove the learnability characterization stated in \Cref{thm:alignment-learnability}. The main idea is to reduce the alignment problem to learning a partial binary comparison class. The learner first learns to distinguish a good response from a bad response when they are presented as a pair. At test time, it compares every pair of candidate responses and picks a most-winning candidate. We prove the upper bound in \Cref{subsec:pairwise-comparison}-\Cref{subsec:upper-bound}. We then prove the matching necessity result in \Cref{subsec:lower-bound} by constructing a hard family of independent pairwise preferences.

\subsection{Learning the pairwise comparison class}
  \label{subsec:pairwise-comparison}
For any reward function $r\in\rwc$, define its induced partial pairwise comparison function $g_{r}:\mathcal{X}\times\mathcal{Y}\times\mathcal{Y}\to\{0,1,*\}$ according to
\begin{equation*}
  g_{r}(x,y^{(0)},y^{(1)})=
  \begin{cases}
    0, & r(x,y^{(0)})=1\;\;\text{and}\;\;r(x,y^{(1)})=0,\\
    1, & r(x,y^{(0)})=0\;\;\text{and}\;\;r(x,y^{(1)})=1,\\
    *, & r(x,y^{(0)})=r(x,y^{(1)}).
  \end{cases}
\end{equation*}
Let $\mathcal{G}(\rwc)=\{g_{r}:r\in\rwc\}$ be the induced partial comparison class. It is easy to notice that the VC dimension of the partial class $\mathcal{G}(\rwc)$ is equivalent to the alignment dimension of the reward class $\rwc$, $d_{\mathrm{ALN}}(\rwc)$.

Let $\mathcal{U}=\mathcal{X}\times\mathcal{Y}\times\mathcal{Y}$. For a fixed target reward function $r^{*}\in\rwc$, define $g^{*}=g_{r^{*}}$. Let $\mathcal{D}_{\mathrm{pair}}$ be the distribution over $\mathcal{U}$ generated according to
\begin{equation}
  \label{eq:pair-distribution}
  X\sim\mathcal{D}_{X},\qquad Y^{(0)},Y^{(1)}\overset{\mathrm{i.i.d.}}{\longsim}\piref(\cdot|X),\qquad U=(X,Y^{(0)},Y^{(1)}).
\end{equation}
The comparison label $g^{*}(U)$ is observable from the binary reward labels: it equals $0$ when the first response is good and the second is bad, equals $1$ in the reverse case, and equals $*$ when the two responses have the same reward value.

We first recall a standard consequence of the PAC learnability of partial concept classes. We state it in a form that controls the error under the ambient distribution $\mathcal{D}_{\mathrm{pair}}$, rather than only under the distribution conditioned on observing a non-$*$ label.

\begin{lemma}
  \label{lem:partial-comparator-learning}
Let $\mathcal{H}\subseteq\{0,1,*\}^{\mathcal{U}}$ be a partial concept class with $\mathrm{VC}(\mathcal{H})=d<\infty$, and let $h^{*}\in\mathcal{H}$. Suppose that $U_{1},\ldots,U_{m}$ are generated i.i.d. from a distribution $\mathcal{D}$ over $\mathcal{U}$ and that the partial labels $h^{*}(U_{1}),\ldots,h^{*}(U_{m})$ are observed. There exists a learning algorithm $\mathbb{A}_{\mathrm{OIG}}$ which returns a total predictor $\hat{h}_{m}:\mathcal{U}\to\{0,1\}$ such that, for any $\eta,\delta\in(0,1)$,
\begin{equation}
  \label{eq:partial-comparator-sample-complexity}
  m \geq C\frac{1}{\eta}\left((d\vee 1)\log^{2}\left(\frac{d\vee 1}{\eta}\right)+\log\left(\frac{1}{\delta}\right)\right)
\end{equation}
for a universal constant $C>0$ implies
\begin{equation}
  \label{eq:partial-comparator-error}
  \Pr_{U\sim\mathcal{D}}\left(h^{*}(U)\in\{0,1\}\;\;\mathrm{and}\;\;\hat{h}_{m}(U)\neq h^{*}(U)\right) \leq \eta
\end{equation}
with probability at least $1-\delta$ over the training sample and the internal randomness of the learner. Here, $d\vee 1=\max\{d,1\}$.
\end{lemma}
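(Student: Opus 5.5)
Our approach is to reduce to the known PAC learnability of partial concept classes due to Alon, Hanneke, Holzman and Moran (AHHM), whose learner is based on the one-inclusion graph (OIG) algorithm. The key issue is that their guarantee is stated for distributions that are realizable by the partial class, i.e.\ whose support lies in $\{(u,h^{*}(u)) : h^{*}(u)\neq *\}$. Our $\mathcal{D}$ instead places mass on points where $h^{*}(U)=*$. We handle this by conditioning. Let $p=\Pr_{U\sim\mathcal{D}}(h^{*}(U)\in\{0,1\})$. If $p\le\eta$, the event in \eqref{eq:partial-comparator-error} has probability at most $p\le\eta$ for any predictor, and we are done trivially. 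Otherwise, define $\mathcal{D}'=\mathcal{D}(\cdot\mid h^{*}(U)\neq *)$. Then $\mathcal{D}'$, labeled by $h^{*}$, is realizable by $\mathcal{H}$, and the target probability equals $p\cdot\Pr_{U\sim\mathcal{D}'}(\hat h_m(U)\neq h^{*}(U))$.

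The algorithm $\mathbb{A}_{\mathrm{OIG}}$ first discards all training points with label $*$. It then runs the AHHM partial-class learner on the $n$ remaining labeled points, which are i.i.d.\ from $\mathcal{D}'$ conditional on $n$. It outputs the resulting total predictor, arbitrarily completed if necessary.

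We rely on the AHHM sample complexity bound in the following form: with $n\ge C'\frac{1}{\eta'}\bigl((d\vee1)\log^2\frac{d\vee1}{\eta'}+\log\frac1\delta\bigr)$ realizable samples, the error under $\mathcal{D}'$ is at most $\eta'$ with probability $1-\delta$. This is obtained from the OIG expected-error bound $O(d/n)$ via a standard confidence-boosting/compression argument, which accounts for the $\log^2$ factor. We apply it with $\eta'=\eta/p$, which makes the final error at most $p\cdot\eta/p=\eta$. The required number of non-$*$ samples then scales like $\frac{p}{\eta}(\cdots)$, and $\log\frac{d\vee 1}{\eta'}\le\log\frac{d\vee1}{\eta}$ because $p\le1$.

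Finally, we control $n$. Since $n\sim\mathrm{Bin}(m,p)$, a multiplicative Chernoff bound gives $n\ge pm/2$ with probability at least $1-e^{-pm/8}$. Taking $m\ge C\frac{1}{\eta}(\cdots)$ with $C$ large, we have $pm\ge \eta m\ge C\log(1/\delta)$, so this failure probability is at most $\delta/2$. On this event, $pm/2$ exceeds the threshold for accuracy $\eta/p$ at confidence $\delta/2$. A union bound over the two failure events completes the argument. Because the samples given $n$ are i.i.d.\ from $\mathcal{D}'$, we condition on $n$ before applying the guarantee.

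We expect the main obstacle to be citing or establishing the high-probability AHHM bound with exactly the stated $\log^2$ dependence. If we only had the OIG expectation bound, we would boost it by running the learner on $O(\log(1/\delta))$ disjoint chunks and selecting the best hypothesis on a holdout set of non-$*$ points. The holdout selection is legitimate because, on non-$*$ points, the error is ordinary 0-1 error. This route, however, gives a $\frac{d}{\eta}\log\frac1\delta$-type bound. To obtain the additive form stated in the lemma, we would use the AHHM sample compression argument instead.
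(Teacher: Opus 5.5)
Your proposal is correct and follows essentially the same route as the paper. It discards the $*$-labeled points and runs the AHHM one-inclusion-graph/boosting/compression learner on the conditional distribution, disposes of the case $\Pr(h^{*}(U)\in\{0,1\})\le\eta$ trivially, and combines a multiplicative Chernoff bound on the number of retained points with a union bound. The only cosmetic difference is that you invoke the AHHM guarantee in sample-complexity form at accuracy $\eta/p$, whereas the paper uses the error-rate form $C_{0}\bigl((d\vee 1)\log^{2}((d\vee1)n)+\log(2/\delta)\bigr)/n$ and multiplies it by $\mu$ on the event $N\ge m\mu/2$.
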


\begin{proof}
Let
\begin{equation*}
  \mu=\Pr_{U\sim\mathcal{D}}\left(h^{*}(U)\in\{0,1\}\right)
\end{equation*}
be the probability of observing a non-$*$ label. Discard all examples whose labels equal $*$. Conditional on the number $N$ of retained examples, the retained examples are generated i.i.d. from the distribution $\mathcal{D}$ conditioned on $h^{*}(U)\in\{0,1\}$, and they are realizable by the partial class $\mathcal{H}$.

We run the one-inclusion graph learner followed by the boosting and sample-compression procedure of \cite{alon2022partial} on these $N$ retained examples. Conditional on $N=n\geq 1$, its conditional error is at most
\begin{equation}
  \label{eq:oig-conditional-error}
  C_{0}\frac{(d\vee 1)\log^{2}((d\vee 1)n)+\log(2/\delta)}{n}
\end{equation}
with probability at least $1-\delta/2$, where $C_{0}>0$ is a universal constant. When $N=0$, the learner returns an arbitrary binary predictor.

If $\mu\leq\eta$, then the left-hand side of \Cref{eq:partial-comparator-error} is at most $\mu\leq\eta$ for every predictor. Suppose next that $\mu>\eta$. Since $N\sim\mathrm{Bin}(m,\mu)$, a multiplicative Chernoff bound gives
\begin{equation*}
  \Pr\left(N<\frac{m\mu}{2}\right)\leq \exp\left(-\frac{m\mu}{8}\right)\leq\frac{\delta}{2},
\end{equation*}
where the last inequality follows from \Cref{eq:partial-comparator-sample-complexity} after increasing the universal constant $C$ if necessary. On the event $N\geq m\mu/2$, multiplying the conditional error in \Cref{eq:oig-conditional-error} by $\mu$ shows that the error under the ambient distribution is at most
\begin{equation*}
  C_{0}\mu\frac{(d\vee 1)\log^{2}((d\vee 1)m)+\log(2/\delta)}{N}
  \leq
  \frac{2C_{0}}{m}\left((d\vee 1)\log^{2}((d\vee 1)m)+\log(2/\delta)\right).
\end{equation*}
The choice of $m$ in \Cref{eq:partial-comparator-sample-complexity}, with a sufficiently large universal constant, makes the last display at most $\eta$. Taking a union bound over the Chernoff event and the failure event of the partial-class learner proves the lemma.
\end{proof}

We apply \Cref{lem:partial-comparator-learning} to $\mathcal{H}=\mathcal{G}(\rwc)$. A pairwise training sample can be extracted from the original slate-valued sample $S_{m}$. For every $i\in[m]$, independently choose an ordered pair of distinct indices $(P_{i},Q_{i})$ uniformly from $[k]\times[k]$. Define
\begin{equation*}
  U_{i}=\left(X_{i},Y_{i}^{(P_{i})},Y_{i}^{(Q_{i})}\right).
\end{equation*}
Because the responses within every slate are i.i.d., $U_{1},\ldots,U_{m}$ are i.i.d.\ according to $\mathcal{D}_{\mathrm{pair}}$. Moreover, the label $g^{*}(U_{i})$ is known from $A_{r^{*}}(Z_{i})$: it is $0$ if $P_{i}\in A_{r^{*}}(Z_{i})$ and $Q_{i}\notin A_{r^{*}}(Z_{i})$, it is $1$ in the reverse case, and it is $*$ otherwise.

\subsection{The pairwise tournament learning procedure}
  \label{subsec:pairwise-tournament}
We now present the learning procedure. The training stage learns the partial pairwise comparison class using \Cref{lem:partial-comparator-learning}. At test time, the learner compares every pair of candidate responses. Each pairwise winner receives one point, and the learner returns a response with the largest number of wins. Our learning strategy is formally presented in \Cref{fig:pairwise-tournament-algorithm}.

\begin{figure}[t]
\fbox{
  \parbox{0.97\linewidth}{
    \begin{itemize}[topsep=0pt,leftmargin=0pt]
      \item[] \textbf{Training:}
      \begin{itemize}
        \item For every $i\in[m]$, independently choose an ordered pair of distinct indices $(P_i,Q_i)$ uniformly from $[k]\times[k]$.
        \item Set $U_i=(X_i,Y_i^{(P_i)},Y_i^{(Q_i)})$ and determine the partial label $g^{*}(U_i)\in\{0,1,*\}$ from $A_{r^{*}}(Z_i)$.
        \item Discard the examples with label $*$ and run the partial-class OIG learner to obtain a pairwise predictor $\hat{g}_{m}:\mathcal{U}\to\{0,1\}$.
      \end{itemize}
      \item[] \textbf{Inference:}
      \begin{itemize}
        \item Let $Z=(X,Y^{(1)},\ldots,Y^{(k)})$ be a test instance and initialize $W_{j}=0$ for every $j\in[k]$.
        \item For every $1\leq a<b\leq k$, compute $\hat{g}_{m}(X,Y^{(a)},Y^{(b)})$.
        \begin{itemize}
          \item If the prediction is $0$, update $W_{a}\leftarrow W_{a}+1$.
          \item If the prediction is $1$, update $W_{b}\leftarrow W_{b}+1$.
        \end{itemize}
        \item Return $Y^{(\hat{j})}$, where $\hat{j}$ is chosen uniformly from $\argmax_{j\in[k]}W_{j}$.
      \end{itemize}
    \end{itemize}
  }
}
\caption{The pairwise tournament learning procedure used in \Cref{thm:upper-bound}.}
\label{fig:pairwise-tournament-algorithm}
\end{figure}

The next elementary lemma explains why the tournament rule converts accurate mixed-pair comparisons into a correct selection.

\begin{lemma}
  \label{lem:tournament-selection}
Fix a test slate $Z=(X,Y^{(1)},\ldots,Y^{(k)})$ satisfying $A_{r^{*}}(Z)\neq\emptyset$. Suppose that for every pair containing exactly one accepted response, $\hat{g}_{m}$ selects correctly. Then every maximizer of the score $W_{j}$ belongs to $A_{r^{*}}(Z)$.
\end{lemma}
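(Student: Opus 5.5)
Let $G=A_{r^{*}}(Z)$ denote the set of accepted indices and $B=[k]\setminus G$ the rejected ones, with $g=|G|\geq 1$ and $b=|B|=k-g$. The plan is to compare the score $W_j$ of any $j\in G$ with that of any $j'\in B$, using only the cross pairs $\{a,b\}$ with one index in $G$ and one in $B$. By hypothesis, $\hat{g}_{m}$ predicts correctly on every such pair. Ordering the pair as $(a,b)$ with $a<b$ does not matter, because the label for an ordered pair is just the position of the accepted response. Consequently, every cross pair awards its point to the member lying in $G$. Pairs inside $G$ or inside $B$ have label $*$ under $g^{*}$, and $\hat{g}_{m}$ may award their point arbitrarily. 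Each such pair still gives exactly one point to one of its two members.

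Next I bound the scores. For $j'\in B$, the point from every cross pair involving $j'$ goes to its partner in $G$. So $j'$ can only collect points from pairs within $B$, which gives $W_{j'}\leq b-1$ (and $W_{j'}=0$ when $b=1$). For $j\in G$, $j$ wins all $b$ of its cross pairs, so $W_j\geq b$. Hence every accepted index has a strictly larger score than every rejected index.

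To conclude, suppose $\hat{j}$ maximizes $W$ but $\hat{j}\in B$. Since $G\neq\emptyset$, we can pick some $j\in G$, and then $W_j\geq b>b-1\geq W_{\hat{j}}$, which contradicts maximality. Therefore every maximizer lies in $G$. In the edge case $B=\emptyset$, every index is accepted and the claim holds trivially. No step here is a real obstacle; the only point needing care is the nonemptiness of $G$, which the hypothesis $A_{r^{*}}(Z)\neq\emptyset$ supplies.
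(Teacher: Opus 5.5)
Your proof is correct and matches the paper's argument: accepted indices win all $|B|$ cross pairs and so score at least $|B|$, while rejected indices lose every cross pair and score at most $|B|-1$, so every maximizer is accepted. One cosmetic point: you use $b$ both for $|B|$ and for the second index of a pair $(a,b)$, and it would read better to rename one of them.
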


\begin{proof}
Let $G=|A_{r^{*}}(Z)|$ and $B=k-G$. Every accepted response defeats all $B$ rejected responses, and therefore has score at least $B$. On the other hand, every rejected response loses to all $G$ accepted responses and can defeat at most the other $B-1$ rejected responses. Hence, every rejected response has score at most $B-1$. It follows that every maximizer of the score is accepted.
\end{proof}

\subsection{Upper bound}
  \label{subsec:upper-bound}
We are now ready to prove the sufficiency direction of \Cref{thm:alignment-learnability}.

\begin{theorem}
  \label{thm:upper-bound}
Suppose that $d_{\mathrm{ALN}}(\rwc)=d<\infty$. For any $\gamma,\epsilon,\delta\in(0,1)$, choose
\begin{equation}
  \label{eq:k-upper-bound}
  k\geq \max\left\{2,\left\lceil\frac{1}{\gamma}\log\left(\frac{2}{\epsilon}\right)\right\rceil\right\}.
\end{equation}
There exists a universal constant $C>0$ such that the pairwise tournament procedure satisfies
\begin{equation*}
  \Pr_{Z\sim\mathcal{D}_{k}}\left(\hat{i}_{m}(Z)\in A_{r^{*}}(Z)\right)\geq 1-\epsilon
\end{equation*}
with probability at least $1-\delta$, provided that
\begin{equation}
  \label{eq:upper-bound-sample-complexity}
  m\geq C\frac{k^{2}}{\epsilon}\left((d\vee1)\log^{2}\left(\frac{(d\vee1)k}{\epsilon}\right)+\log\left(\frac{1}{\delta}\right)\right).
\end{equation}
In particular,
\begin{equation*}
  m=\widetilde{O}\left(\frac{k^{2}}{\epsilon}\left(d+\log\left(\frac{1}{\delta}\right)\right)\right)
\end{equation*}
suffices.\footnote{When using $\tilde{O}(\cdot)$ notation, we save polylogarithmic factors in $d$, $k$ and $1/\epsilon$.}
\end{theorem}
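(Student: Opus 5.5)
The plan is to split the failure event of the tournament into two bad events on the test slate $Z\sim\mathcal{D}_{k}$ and make each occur with probability at most $\epsilon/2$. Let $E_{1}=\{A_{r^{*}}(Z)=\emptyset\}$. Let $E_{2}$ be the event that there exist $1\leq a<b\leq k$ with exactly one of $a,b$ in $A_{r^{*}}(Z)$ and $\hat{g}_{m}(X,Y^{(a)},Y^{(b)})\neq g^{*}(X,Y^{(a)},Y^{(b)})$. Off $E_{1}\cup E_{2}$, \Cref{lem:tournament-selection} shows that every maximizer of $W_{j}$ is accepted. Since the tie-break is uniform among maximizers, $\hat{i}_{m}(Z)\in A_{r^{*}}(Z)$ on that event. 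Hence
\[
\Pr_{Z}\bigl(\hat{i}_{m}(Z)\notin A_{r^{*}}(Z)\bigr)\leq\Pr(E_{1})+\Pr(E_{2}),
\]
and the remaining task is to bound each term.

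For $E_{1}$, the coverage assumption and the i.i.d.\ responses give $\Pr(E_{1})\leq(1-\gamma)^{k}\leq e^{-\gamma k}$. The choice of $k$ in \Cref{eq:k-upper-bound} makes this at most $\epsilon/2$. The condition $k\geq2$ is needed only so that distinct pairs exist, both for the training extraction and for the tournament.

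For $E_{2}$, take a union bound over the $\binom{k}{2}$ unordered index pairs. For fixed $a<b$, the triple $(X,Y^{(a)},Y^{(b)})$ is distributed exactly as $\mathcal{D}_{\mathrm{pair}}$, because the slate responses are i.i.d.\ given $X$. The event that this pair is mixed and misjudged is exactly $\{g^{*}(U)\in\{0,1\},\ \hat{g}_{m}(U)\neq g^{*}(U)\}$ with $U\sim\mathcal{D}_{\mathrm{pair}}$. Therefore
\[
\Pr(E_{2})\leq\binom{k}{2}\,\Pr_{U\sim\mathcal{D}_{\mathrm{pair}}}\bigl(g^{*}(U)\in\{0,1\},\ \hat{g}_{m}(U)\neq g^{*}(U)\bigr).
\]
We then invoke \Cref{lem:partial-comparator-learning} with $\mathcal{H}=\mathcal{G}(\rwc)$, which has VC dimension $d=d_{\mathrm{ALN}}(\rwc)$, and with target accuracy $\eta=\epsilon/k^{2}$. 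The extracted pairs $U_{1},\ldots,U_{m}$ are i.i.d.\ from $\mathcal{D}_{\mathrm{pair}}$ with observable labels, as noted after that lemma. With probability at least $1-\delta$ the ambient error is at most $\eta$, so $\Pr(E_{2})\leq\binom{k}{2}\epsilon/k^{2}\leq\epsilon/2$.

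Substituting $\eta=\epsilon/k^{2}$ into \Cref{eq:partial-comparator-sample-complexity} yields
\[
m\geq C\frac{k^{2}}{\epsilon}\Bigl((d\vee1)\log^{2}\Bigl(\frac{(d\vee1)k^{2}}{\epsilon}\Bigr)+\log\Bigl(\frac{1}{\delta}\right)\Bigr),
\]
where the inner $\log$ is balanced with a plain parenthesis pair. Since $\log^{2}(k^{2}x)\leq4\log^{2}(kx)$, absorbing the factor into $C$ gives \Cref{eq:upper-bound-sample-complexity}, and the $\widetilde{O}$ form follows.

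The main point needing care is the reuse of the same learned $\hat{g}_{m}$ across all $\binom{k}{2}$ test pairs, which are dependent through the shared $X$ and shared responses. The union bound handles this because it needs only the marginal law of each pair, which is $\mathcal{D}_{\mathrm{pair}}$, conditionally on the training sample. Since $\hat{g}_{m}$ is a fixed function once we condition on the sample, the $1-\delta$ event from the lemma suffices, and no independence across pairs is required.
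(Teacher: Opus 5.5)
Your proof is correct and follows the paper's argument essentially line by line. You use the same split into the no-accepted-response event, bounded by $(1-\gamma)^{k}\le e^{-\gamma k}\le\epsilon/2$, and the misjudged-mixed-pair event, bounded via \Cref{lem:partial-comparator-learning} and a union bound over the $\binom{k}{2}$ test pairs, then combine them through \Cref{lem:tournament-selection}; the only cosmetic differences are your choice $\eta=\epsilon/k^{2}$ instead of the paper's $\eta=\epsilon/(2\binom{k}{2})$ and your explicit check of the absorption $\log^{2}((d\vee1)k^{2}/\epsilon)\le4\log^{2}((d\vee1)k/\epsilon)$, which the paper leaves implicit.
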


\begin{proof}
Set
\begin{equation*}
  \eta=\frac{\epsilon}{2\binom{k}{2}}.
\end{equation*}
By \Cref{def:alignment-dimension}, the partial comparison class $\mathcal{G}(\rwc)$ has VC dimension $d$. Hence, \Cref{lem:partial-comparator-learning} and \Cref{eq:upper-bound-sample-complexity} imply that, with probability at least $1-\delta$ over the training sample,
\begin{equation}
  \label{eq:mixed-pair-error-bound}
  \Pr_{U\sim\mathcal{D}_{\mathrm{pair}}}\left(g^{*}(U)\in\{0,1\}\;\;\mathrm{and}\;\;\hat{g}_{m}(U)\neq g^{*}(U)\right)\leq\eta.
\end{equation}
Condition on a training sample for which \Cref{eq:mixed-pair-error-bound} holds. For a fresh test slate, every ordered pair $(X,Y^{(a)},Y^{(b)})$, with $a<b$, has marginal distribution $\mathcal{D}_{\mathrm{pair}}$. A union bound over the $\binom{k}{2}$ pairs yields
\begin{equation}
  \label{eq:test-pair-union-bound}
  \Pr_{Z\sim\mathcal{D}_{k}}\left(\text{some pair containing exactly one accepted response selects incorrectly}\right)
  \leq \binom{k}{2}\eta
  =\frac{\epsilon}{2}.
\end{equation}

Moreover, the coverage assumption and the conditional independence of the test responses imply
\begin{equation}
  \label{eq:no-good-response-probability}
  \Pr_{Z\sim\mathcal{D}_{k}}\left(A_{r^{*}}(Z)=\emptyset\right)
  \leq (1-\gamma)^{k}
  \leq e^{-\gamma k}
  \leq\frac{\epsilon}{2},
\end{equation}
where the last inequality follows from \Cref{eq:k-upper-bound}. Whenever the test slate contains an accepted response and no mixed pair is classified incorrectly, \Cref{lem:tournament-selection} guarantees that the returned response is accepted. Combining \Cref{eq:test-pair-union-bound} and \Cref{eq:no-good-response-probability} gives
\begin{equation*}
  \Pr_{Z\sim\mathcal{D}_{k}}\left(\hat{i}_{m}(Z)\notin A_{r^{*}}(Z)\right)\leq\epsilon.
\end{equation*}
This proves the result.
\end{proof}

\subsection{Lower bound}
  \label{subsec:lower-bound}
We next prove that the finiteness of the alignment dimension is necessary. The lower bound uses the fact that a shattered family contains many independent pairwise preferences, including preferences associated with different pairs sharing the same prompt.

\begin{lemma}
  \label{lem:disjoint-pairs-same-prompt}
Suppose that $u_{1},\ldots,u_{d}$ are alignment shattered by $\rwc$, where $u_{i}=(x_{i},y_{i}^{(0)},y_{i}^{(1)})$. If $i\neq j$ and $x_{i}=x_{j}$, then
\begin{equation*}
  \{y_{i}^{(0)},y_{i}^{(1)}\}\cap\{y_{j}^{(0)},y_{j}^{(1)}\}=\emptyset.
\end{equation*}
\end{lemma}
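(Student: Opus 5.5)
We argue by contradiction, using only the definition of alignment shattering. Suppose $i\neq j$, $x_i=x_j=x$, and the two response sets share an element. Then there are $a,c\in\{0,1\}$ with $y_i^{(a)}=y_j^{(c)}=:y$.

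The key point is that shattering forces each $r_{\vb}$ to assign complementary values to the two responses of every triple. In particular, $r_{\vb}(x,y_i^{(a)})=1$ exactly when $b_i=a$, and $r_{\vb}(x,y_j^{(c)})=1$ exactly when $b_j=c$.

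Now pick a boolean vector $\vb\in\{0,1\}^d$ with $b_i=a$ and $b_j=1-c$; this is possible because $i\neq j$, and the remaining coordinates can be arbitrary. The shattering hypothesis gives some $r_{\vb}\in\rwc$. Its constraint at index $i$ forces $r_{\vb}(x,y)=r_{\vb}(x_i,y_i^{(a)})=1$. Its constraint at index $j$ forces $r_{\vb}(x,y)=r_{\vb}(x_j,y_j^{(c)})=0$. A single function cannot take both values at the same point $(x,y)$, so we have a contradiction.

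The degenerate case $y_i^{(0)}=y_i^{(1)}$ cannot occur within one triple: shattering would require $r_{\vb}$ to assign it both $1$ and $0$. So the labels $a$ and $c$ are well defined, and no further cases arise.

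The only real care needed is this bookkeeping: a shared element may be the $0$-th response of one triple and the $1$-st of the other, or the same slot in both. Choosing $b_i=a$ and $b_j=1-c$ handles all four combinations uniformly. There is no substantive obstacle.
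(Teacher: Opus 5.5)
Your proposal is correct and matches the paper's proof: you choose $\vb$ with $b_i=a$ and $b_j=1-c$, so the shattering constraints force the shared response to receive reward $1$ through the $i$-th triple and $0$ through the $j$-th. Your remark about the degenerate case $y_i^{(0)}=y_i^{(1)}$ is harmless but not needed, since the argument works for any choice of $a,c$ witnessing the shared response.
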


\begin{proof}
Suppose for contradiction that $y_{i}^{(a)}=y_{j}^{(c)}$ for some $a,c\in\{0,1\}$. By alignment shattering, there exists a reward function corresponding to a boolean vector satisfying $b_{i}=a$ and $b_{j}=1-c$. The shattering conditions would require the shared response to have reward $1$ through the $i$-th pair and reward $0$ through the $j$-th pair, which is impossible.
\end{proof}

The following lemma gives a finite-dimensional form of the lower bound. It is sufficient for the necessity direction and also records an explicit dependence on the slate size $k$.

\begin{lemma}
  \label{lem:finite-dimensional-lower-bound}
Suppose that $d_{\mathrm{ALN}}(\rwc)\geq d$. Fix any learning algorithm, any training sample size $m\in\naturalnumber$, and any slate size $k\in\naturalnumber$. If
\begin{equation*}
    d\geq 4mk^{2},
\end{equation*}
there exist a target reward function $r^{*}\in\rwc$, a distribution $\mathcal{D}_{X}$, and a reference policy $\piref$ satisfying the coverage condition with $\gamma=1/2$, such that
\begin{equation}
  \label{eq:lower-bound-high-probability}
  \Pr_{S_{m}}\left(\Pr_{Z\sim\mathcal{D}_{k}}\left(\hat{i}_{m}(Z)\notin A_{r^{*}}(Z)\right)\geq\frac{1}{8}\right)\geq\frac{2}{7}.
\end{equation}
The probability also includes the internal randomness of the learner when the learner is randomized.
\end{lemma}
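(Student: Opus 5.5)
The plan is a probabilistic-method argument. I will put a uniformly random target on an alignment-shattered set and show that the learner's test-time error, averaged over the random target, is at least $3/8$. I will then fix a good target and convert the expectation bound into the high-probability statement \Cref{eq:lower-bound-high-probability} by a reverse Markov argument.

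\textbf{Construction.} Let $u_{1},\ldots,u_{d}$, with $u_{i}=(x_{i},y_{i}^{(0)},y_{i}^{(1)})$, be alignment shattered by $\rwc$. For each distinct prompt $x$ among the $x_{i}$, let $I_{x}=\{i:x_{i}=x\}$ and $n_{x}=|I_{x}|$. By \Cref{lem:disjoint-pairs-same-prompt}, the $2n_{x}$ responses $\{y_{i}^{(0)},y_{i}^{(1)}:i\in I_{x}\}$ are pairwise distinct.

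I define $\mathcal{D}_{X}(x)=n_{x}/d$ and let $\piref(\cdot|x)$ be uniform on these $2n_{x}$ responses. For $\vb\in\{0,1\}^{d}$, take the target $r^{*}=r_{\vb}$ given by shattering. This target makes exactly one response of each pair good, so exactly half of the support of $\piref(\cdot|x)$ is good. Hence the coverage condition holds with $\gamma=1/2$ for every $\vb$.

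Equivalently, a draw $X\sim\mathcal{D}_{X}$ followed by $Y\sim\piref(\cdot|X)$ amounts to choosing a pair index $I$ uniformly from $[d]$ and a side uniformly from $\{0,1\}$. In particular, every individual response in a slate, whether in training or at test time, is associated with a pair index that is marginally uniform on $[d]$.

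\textbf{Averaging argument.} Draw $\vb$ uniformly from $\{0,1\}^{d}$. The training sample $S_{m}$ contains $mk$ responses and therefore touches at most $mk$ pair indices. The labels in $S_{m}$ depend only on the coordinates $b_{i}$ of touched pairs.

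Now take a test slate $Z$. The learner's output $\hat{i}_{m}(Z)$ is some response $y_{i}^{(c)}$, and it is a function of $S_{m}$, $Z$ and the internal randomness. None of these involve $b_{i}$ when pair $i$ is untouched, because test slates carry no labels. So whenever the selected pair is untouched, $b_{i}$ is still uniform conditionally on everything the learner has seen, and the selected response is good with probability exactly $1/2$.

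It remains to bound the probability that the selected pair was touched in training. The selected pair is one of the $k$ pair indices appearing in $Z$. Each of these is uniform on $[d]$ and independent of $S_{m}$, so it lies in the touched set with probability at most $mk/d$. A union bound over the $k$ slate positions gives
\[
\Pr(\text{selected pair touched})\leq \frac{mk^{2}}{d}\leq\frac{1}{4},
\]
using $d\geq 4mk^{2}$. Therefore
\[
\E_{\vb}\,\E_{S_{m}}\left[\mathrm{err}(S_{m})\right]\geq \frac{3}{4}\cdot\frac{1}{2}=\frac{3}{8},
\]
where $\mathrm{err}(S_{m})=\Pr_{Z}(\hat{i}_{m}(Z)\notin A_{r^{*}}(Z))$.

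\textbf{Conclusion.} Fix a $\vb$ achieving $\E_{S_{m}}[\mathrm{err}]\geq 3/8$. Since $\mathrm{err}\in[0,1]$, applying Markov's inequality to $1-\mathrm{err}$ gives
\[
\Pr\left(\mathrm{err}\geq\frac{1}{8}\right)\geq \frac{3/8-1/8}{1-1/8}=\frac{2}{7},
\]
where the probability is also taken over the learner's internal randomness. This is exactly \Cref{eq:lower-bound-high-probability}.

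The step needing the most care is the conditional-independence claim. I need that, given an untouched selected pair, $b_{i}$ is independent of the learner's view, even though other responses from the same prompt, or even from the same pair, appear in $Z$ or in $S_{m}$. This holds because the coordinates of $\vb$ are independent, and labels reveal only the coordinates of touched pairs. The disjointness supplied by \Cref{lem:disjoint-pairs-same-prompt} is what makes each response belong to a unique pair, so that this bookkeeping is well defined.
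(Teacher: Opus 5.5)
Your proposal is correct and follows the paper's proof essentially step for step. It uses the same hard distribution built from \Cref{lem:disjoint-pairs-same-prompt}, the same uniformly random $\vb$, the same $\frac{1}{2}\cdot\frac{3}{4}=\frac{3}{8}$ average-error bound, and the same reverse-Markov step giving $\frac{2}{7}$. Your union bound over the $k$ slate positions is just the paper's Bernoulli-inequality computation $\sum_x \frac{n_x}{d}\bigl(1-\frac{s_x}{n_x}\bigr)^k \geq 1-\frac{mk^2}{d}$ written differently.
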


\begin{proof}
Let
\begin{equation*}
  u_{i}=\left(x_{i},y_{i}^{(0)},y_{i}^{(1)}\right),\qquad i\in[d],
\end{equation*}
be alignment shattered by $\rwc$. For every prompt $x$ appearing among $x_{1},\ldots,x_{d}$, define
\begin{equation*}
  I_{x}=\{i\in[d]:x_{i}=x\},\qquad n_{x}=|I_{x}|.
\end{equation*}
By \Cref{lem:disjoint-pairs-same-prompt}, the response pairs indexed by $I_x$ are disjoint. Define
\begin{equation}
  \label{eq:lower-bound-prompt-distribution}
  \mathcal{D}_{X}(x)=\frac{n_{x}}{d}
\end{equation}
for every prompt appearing in the shattered family. Conditional on such a prompt $x$, define the reference policy by
\begin{equation}
  \label{eq:lower-bound-reference-policy}
  \piref\left(y_{i}^{(0)}\mid x\right)=\piref\left(y_{i}^{(1)}\mid x\right)=\frac{1}{2n_{x}},\qquad i\in I_{x}.
\end{equation}

Choose a boolean vector $\vb$ uniformly from $\{0,1\}^{d}$, and let $r_{\vb}\in\rwc$ be a reward function witnessing the corresponding preferences. For every prompt in the support of $\mathcal{D}_{X}$, exactly one endpoint of each pair has reward $1$. Consequently,
\begin{equation*}
  \Pr_{Y\sim\piref(\cdot|x)}\left(r_{\vb}(x,Y)=1\right)=\frac{1}{2},
\end{equation*}
so the coverage condition holds with $\gamma=1/2$.

For a realized training sample $S_m$, let $J(S_m)\subseteq[d]$ denote the set of pair indices for which at least one endpoint appears among the $mk$ training responses. Clearly,
\begin{equation*}
  |J(S_m)|\leq mk.
\end{equation*}
For every prompt $x$, let $s_x=|J(S_m)\cap I_x|$. Conditional on $X=x$, every test response first chooses a pair index uniformly from $I_x$ and then chooses one of its two endpoints uniformly. Hence, conditional on $S_m$, the probability that all $k$ test responses come from pair indices outside $J(S_m)$ is
\begin{align*}
  \sum_{x}\frac{n_x}{d}\left(1-\frac{s_x}{n_x}\right)^k \geq{}& \sum_x\frac{n_x}{d}\left(1-\frac{ks_x}{n_x}\right) \\
  ={}& 1-\frac{k}{d}\sum_x s_x \\
  ={}& 1-\frac{k|J(S_m)|}{d} \\
  \geq{}& 1-\frac{mk^2}{d} \\
  \geq{}& \frac{3}{4},
\end{align*}
where the first inequality uses $(1-t)^k\geq1-kt$ for $t\in[0,1]$.

On this event, whichever candidate the learner selects belongs to a pair whose preference bit $b_i$ is not revealed by the training sample. Under the uniformly random choice of $\vb$, that preference bit is independent of the training sample, the test slate, and the learner's selection rule. Therefore, the selected response has reward $0$ with probability $1/2$. It follows that
\begin{equation*}
  \E_{\vb,S_m}\left[\Pr_{Z\sim\mathcal{D}_{k}}\left(\hat{i}_{m}(Z)\notin A_{r_{\vb}}(Z)\right)\right]\geq\frac{1}{2}\cdot\frac{3}{4}=\frac{3}{8}.
\end{equation*}
By the probabilistic method, there exists a deterministic $\vb$ such that, for $r^{*}=r_{\vb}$,
\begin{equation*}
  \E_{S_m}\left[\Pr_{Z\sim\mathcal{D}_{k}}\left(\hat{i}_{m}(Z)\notin A_{r^{*}}(Z)\right)\right]\geq\frac{3}{8}.
\end{equation*}
Let
\begin{equation*}
  R(S_m)=\Pr_{Z\sim\mathcal{D}_{k}}\left(\hat{i}_{m}(Z)\notin A_{r^{*}}(Z)\right)\in[0,1].
\end{equation*}
Since
\begin{equation*}
  \E[R(S_m)]\leq\frac{1}{8}\Pr\left(R(S_m)<\frac{1}{8}\right)+\Pr\left(R(S_m)\geq\frac{1}{8}\right),
\end{equation*}
we obtain
\begin{equation*}
  \Pr\left(R(S_m)\geq\frac{1}{8}\right)\geq\frac{\frac{3}{8}-\frac{1}{8}}{1-\frac{1}{8}}=\frac{2}{7}.
\end{equation*}
This proves \Cref{eq:lower-bound-high-probability}.
\end{proof}

\begin{theorem}
  \label{thm:lower-bound}
If $d_{\mathrm{ALN}}(\rwc)=\infty$, then $\rwc$ is not alignment learnable.
\end{theorem}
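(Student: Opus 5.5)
We argue by contradiction from \Cref{lem:finite-dimensional-lower-bound}. Suppose $d_{\mathrm{ALN}}(\rwc)=\infty$ and that some algorithm $\mathcal{A}$ witnesses alignment learnability in the sense of \Cref{def:alignment-learnability}. We fix the parameters $\gamma=1/2$, $\epsilon=1/16$ (any value strictly below $1/8$ works) and $\delta=1/4$ (any value strictly below $2/7$ works). The definition then supplies a finite $k_{0}=k(\gamma,\epsilon,\delta)$. We take any $k\geq k_{0}$, say $k=k_{0}$. For this $k$ the definition supplies a finite $m_{0}=m(k,\epsilon,\delta)$, and we set $m=m_{0}$.

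Since the alignment dimension is infinite, there is an alignment-shattered family of size $d$ for every finite $d$. In particular we may pick one with $d\geq 4mk^{2}$, so that $d_{\mathrm{ALN}}(\rwc)\geq d$. We apply \Cref{lem:finite-dimensional-lower-bound} to the algorithm $\mathcal{A}$ with this $m$, this $k$ and this $d$. It produces a target $r^{*}\in\rwc$, a distribution $\mathcal{D}_{X}$ and a policy $\piref$ satisfying coverage with $\gamma=1/2$. Coverage with $\gamma=1/2$ is exactly the coverage required in the definition for our chosen $\gamma$, so this triple is admissible.

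The lemma gives that, with probability at least $2/7$ over $S_{m}$ and the learner's randomness, the error satisfies $\Pr_{Z\sim\mathcal{D}_{k}}(\hat{i}_{m}(Z)\notin A_{r^{*}}(Z))\geq 1/8>\epsilon$. Learnability, however, demands that this error be at most $\epsilon$ with probability at least $1-\delta=3/4$. Then the event that the error exceeds $\epsilon$ has probability at most $1/4<2/7$, which is a contradiction.

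The only point needing care is the order of quantifiers. The lemma's construction may depend on $m$ and $k$, and it is applied after both have been fixed by the definition. This is legitimate because the definition requires success for every $r^{*}$, $\mathcal{D}_{X}$ and $\piref$ once $k$ and $m$ are fixed. Nothing else is an obstacle, since the lemma does all the probabilistic work.
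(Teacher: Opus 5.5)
Your proposal is correct and matches the paper's proof: the same choice $\gamma=1/2$, $\epsilon=1/16$, $\delta=1/4$, a shattered family with $d\geq 4mk^{2}$, and an application of \Cref{lem:finite-dimensional-lower-bound}, whose failure probability $2/7>\delta$ contradicts \Cref{def:alignment-learnability}. You fix $k$ and $m$ from the definition before invoking the lemma, which is the quantifier order the paper's shorter argument uses implicitly; if $k_{0}$ or $m_{0}$ is not an integer, take the ceiling.
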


\begin{proof}
Fix an arbitrary learning algorithm and set
\begin{equation*}
  \gamma=\frac{1}{2},\qquad \epsilon=\frac{1}{16},\qquad \delta=\frac{1}{4}.
\end{equation*}
For any finite slate size $k$ and any finite training sample size $m$, the assumption $d_{\mathrm{ALN}}(\rwc)=\infty$ allows us to choose $d\geq4mk^{2}$ alignment-shattered response pairs. Applying \Cref{lem:finite-dimensional-lower-bound}, there exist a target reward, a prompt distribution, and a reference policy satisfying the $\gamma$-coverage condition for which the learned selector has test error at least $\epsilon$ with probability at least $2/7>\delta$. This contradicts \Cref{def:alignment-learnability}. Therefore, $\rwc$ is not alignment learnable.
\end{proof}

Combining \Cref{thm:upper-bound} and \Cref{thm:lower-bound} proves \Cref{thm:alignment-learnability}.

\section{Discussion}
  \label{sec:discussion}
We formulate the problem of inference-time alignment into a statistical learning framework, where the goal is to boost the prediction performance of a reference policy by leveraging the preference data.
We define the notion of \emph{alignment learnability} in the sense of weak-to-strong learning: given a reference policy (weak model), is there a finite sample size such that i.i.d.\ training data can be used to produce a strong model that predicts good responses with arbitrarily high probability at test time?
We seek a characterization of alignment learnability --- asking which reward classes are alignment learnable.
As an initial work, we restrict to the realizable case as well as binary rewards.
We introduce a novel and natural combinatorial dimension of the reward class called the alignment dimension, which provides a complete characterization of the alignment learnability --- a reward class is alignment learnable if and only if its alignment dimension is finite.

We consider our work as an initial step towards establishing a general statistical learning theory for alignment. 
Since we just figured out its learnability, a natural follow-up question is to quantify the optimal sample complexity required to achieve the $(\gamma,\epsilon,\delta)$ alignment learnability, namely, the optimal dependencies in terms of $\gamma$, $\epsilon$ and $\delta$.
Moreover, we notice that our learning model differs from existing frameworks such as multiclass learning and list learning, and itself might be of independent interest to the learning theory community.
Indeed, various extensions such as online learning and active learning could be interesting to study. 
The following problems are those more closely related to our formulation.

\subsection{Learning with single-response training data}
  \label{subsec:learning-with-single-response-training-data}
One of the main constraints of our theory is that we assume having multiple responses $y^{(1)},\ldots,y^{(k)}$ sampled from the reference policy $\piref(\cdot|x)$ for each input $x$. 
This assumption arises from a technical reason.
Since the reference policy has a constant probability of outputting a good response, having multiple responses sampled aims to ensure that there exists a good response within the generated tuple.
In particular, having multiple responses at test time is necessary, since the learner is expected to select a response from the test slate. 
This leads to its necessity also in the training data generation since our learning procedure invokes the leave-one-out one-inclusion graph algorithm, which relies on the exchangeability of training and test instances.

On the other hand, in practical language model alignment tasks, usually only two responses are collected for each prompt to be compared by human annotators. 
This motivates the following open problem: which reward classes are alignment-learnable with single-response training data (and of course we still collect multiple responses at test time).
Note that for binary rewards, having one or two responses makes almost no difference, since in both cases the event that all sampled responses have reward values $0$ or $1$ happens with constant probability. 
We conjecture that our alignment dimension might also characterize the ``single-response'' alignment learnability.

\subsection{Learning to predict rather than to select}
  \label{subsec:learning-to-predict-rather-than-to-select}
Another constraint of our formulation is that the learner is expected to select a good response from multiple candidates. 
However, practical alignment methods, especially RLHF methods, instead seek to predict a good response for a given test prompt.\footnote{Note that this does not necessarily require learning a policy from a hypothesis policy class.} 
More formally, under the setting of \Cref{sec:preliminaries}, the goal is then to design a learning algorithm $\mathcal{A}$ that takes $S_{m}$ as input and outputs a predictor $\mathcal{A}(S_{m}):\mathcal{X}\rightarrow\mathcal{Y}$ that, given a test instance $X\sim\mathcal{D}_{X}$, predicts a response $\hat{Y}=\mathcal{A}(S_{m})(X)\in\mathcal{Y}$ such that $r^{*}(X,\hat{Y})=1$ with high probability.

While it might be interesting to have a theoretical understanding of this learning model, it is worth mentioning that this formulation mismatches practical alignment in a sense that, the learner's prediction is irrelevant to the reference policy $\piref$.
However, for practical alignment such as RLHF approaches, a regularization of KL-divergence $\mathbb{D}_{\text{KL}}(\cdot || \piref)$ is usually appended to a policy optimization objective to ensure that the learned policy (predictor) stays close to the reference policy and thus may inherit nice properties from it such as linguistic fluency and broad knowledge.
It is also for this reason that we study selection-based prediction in this work.

\subsection{Learning with preferences}
  \label{subsec:learning-with-preference}
A more significant and challenging extension is to study exactly alignment with preferences. 
Specifically, consider the setting of real-valued rewards, for example, $\rwc\subseteq[0,1]^{\mathcal{X}\times\mathcal{Y}}$.
Every training sample consists of a prompt $x\in\mathcal{X}$, two independent responses $y^{(0)},y^{(1)}$ generated from the reference policy $\piref(\cdot|x)$, and a binary preference label $\sigma=\mathbbm{1}[y^{(0)}\succ y^{(1)}]$ indicating which response is more preferred according to the true reward function $r^{*}\in\rwc$. 
This setting could be considered as an offline variant of dueling bandits.
Given i.i.d.\ samples $S_{m}=\{(x_{i},y_{i}^{(0)},y_{i}^{(1)},\sigma_{i})\}_{i=1}^{m}$, the learner $\mathcal{A}$ is expected to predict a response $\mathcal{A}(S_{m})(x)\in\mathcal{Y}$ on a test prompt $x$ such that
\begin{equation*}
    \Pr_{x\sim\mathcal{D}_{X}, y'\sim\piref(\cdot|x)}\left(\mathcal{A}(S_{m})(x) \succ y'\right) \geq 1-\epsilon .
\end{equation*}
with probability at least $1-\delta$.

The challenge arises in how to measure the quality of the reference policy. 
\cite{huang2025bestofn} made an assumption on the reference policy's coverage. 
They defined the coverage coefficient as
\begin{equation*}
    \mathcal{C}^{\pi^{*}}(x) = \E_{y\sim\pi^{*}(\cdot|x)}\left[\frac{\pi^{*}(y|x)}{\piref(y|x)}\right] 
\end{equation*}
where $\pi^{*}$ is a comparator policy that achieves high true rewards, and assumed that $\mathcal{C}^{\pi^{*}}(x)\leq C$ for some constant $C>0$ holds uniformly for any $x$.
\cite{sriraman2026revisiting} proposed another measure of discrepancy between $\piref$ and $\pi^{*}$ called the $\mathcal{E}_{M}$-divergence. 
\cite{zhao2025sharp} studied other coverage conditions including the Global-Policy Coverage and the Local KL-ball Coverage. 
Note that such assumptions are usually formulated as discrepancy measures between $\piref$ and a comparator policy. 
Instead, can we model the reference policy assumption to only rely on parameters $\gamma$, $\epsilon$ and $\delta$?

\section*{AI disclosure}
The first version of this work was developed by the authors in August 2026. Later, the authors used OpenAI GPT-5.6 Sol in the Extra High effort mode to make corrections, improve bounds, simplify proofs, and to help with writing.


\bibliographystyle{alphaurl}
\bibliography{rlvr}

\end{document}